\documentclass[11pt]{amsart}

\usepackage{amsmath,amssymb,amsthm}
\usepackage{graphicx}
\usepackage{hyperref}

\usepackage{algorithm}
\usepackage{algorithmic}
\usepackage{lineno} 
\usepackage{geometry}
\usepackage{tikz}
\usepackage{pgfplots}

\usepackage{tikz-cd}

\usepackage{microtype}

\theoremstyle{plain} 
\newtheorem{theorem}{Theorem}
\newtheorem{proposition}{Proposition}

\theoremstyle{definition}

\theoremstyle{remark} 
\newtheorem{remark}[theorem]{Remark} 

\def\eq#1{(\ref{#1})}

\def\sech{ {\rm sech}}

\def\beq{\begin{equation}}
\def\eeq{\end{equation}}

\title{A  Dynamical Framework for Cognitive Processes Based on Transformations and Semantic Equivalence}
\author{Carlo Cattani$^{1,*}$ \and Dioneia Motta Monte-Serrat$^{2}$ }
\date{}

\begin{document}

\maketitle

\begin{center}	
	$^1$ Engineering School, DEIM,  University of Tuscia, (VT), 01100, Italy
	Email: cattani@unitus.it 
	\\
	$^2$ Department of Physics, University of Sao Paulo, USP; Department of Law, University of Ribeirao Preto, Unaerp, Brazil. Email: dserrat@unaerp.br
	 \\
	$^{*}$ Corresponding author    
\end{center}

	\begin{abstract}
		
		This paper proposes a structural and dynamical framework for modeling cognitive processes within a cybernetic perspective. Cognitive states are represented as elements of a state space evolving through an iterative update rule of the form
		\[
		X_{t+1} = \pi\big(F(f(X_t))\big),
		\]
		where $f$ describes internal transformations, $F$ represents interpretative mappings, and $\pi$ enforces semantic equivalence.
			The model is interpreted as a feedback system integrating transformation, observation, and stabilization. A categorical formulation is introduced to capture compositional structure, while the associated dynamics are analyzed through fixed-point arguments and contraction conditions ensuring stability.
			To demonstrate the operational character of the framework, a computational illustration is provided, together with a qualitative analysis of the induced dynamics. A concrete linguistic application shows how context-dependent interpretation can be modeled as a trajectory toward a stable semantic class.
			The proposed approach connects dynamical systems, category theory, and cognitive modeling, and provides a unified representation of cognition as a feedback-driven process evolving toward invariant interpretations.
\\\\
\textbf{Keywords:} 
	Cybernetics, cognitive systems, dynamical systems, feedback, category theory, semantic stabilization.
\\\\
\textbf{AMS classification:} \subjclass[2020]{
	37N99,   
	93C55,   
	91E10,   
	18B99,   
	68T27    
}
\end{abstract}


\section{Introduction}  

The mathematical modeling of cognition and language presents challenges due to the complexity offered by the multiple concepts of language and the partial  knowledge about the functioning of cognition. The cognitive linguistic process follows a single dynamic, which articulates logical reasoning with context-sensitive data. 

Understanding cognition requires integrating logical reasoning with the contextual processes through which meaning is 
continuously updated. Classical models of reasoning emphasize propositional structures of the form
\begin{equation}
P \Rightarrow Q
\end{equation}
which represent deductive inference.
The logical aspect is easily represented under the hypothesis ``if P, then Q'', which presupposes an interpretative path detached from the context. This is a general pattern of reasoning, without reference to a specific meaning or context. Logical-mathematical reasoning defines a proposition as a sentence that is either true or false (but not both) \cite{ecc13}. 
However, according to Whorf \cite{who12}, language symbolizes processes of reference and logic, making sentence structure patterns more important in constructing information than the words themselves. Derrida \cite{der68}, in turn, states that grammar (syntax) determines what must be true to represent and confer meaning determining another thought beforehand \cite{der68}, replacing semantic aspects of information through standardization and anticipation.

 While powerful, such logical structures capture only the static dimension of reasoning, and do not allow for the continuously changing representation of the environment's evolution.
In fact, human cognition instead operates dynamically, constantly updating interpretations based on contextual input \cite{moca21,moca21a}. 
To model this process mathematically we propose representing cognition through transformations between cognitive states. Cognition may be broadly defined as the set of processes through which an organism acquires, represents, transforms, and uses information about the world. These processes include perception, learning, memory, reasoning, language, and decision making \cite{Anderson2015,Neisser1967}. In contemporary cognitive science, cognition is often understood as a dynamic system in which internal representations are continuously updated in response to sensory input and contextual information \cite{Friston2010}. From a mathematical perspective, cognitive activity can therefore be modeled as a sequence of transformations acting on representational states, where information is processed, structured, and stabilized through interaction with the environment.

Many authors have already  used  categories to describe universal constructs \cite{MacLane1971} that lead to information. For instance, Al Osaimi  addresses variations in identity and facial expression, under equivalence relations between the descriptors; invariant adjustments are related to rotation \cite{alo20}. 

Invariant learning reflects aspects of human cognition to describe key points in a set of descriptors. The dynamic aspects of semantics can be represented through equivalences and invariants, optimizing the performance of intelligent systems.

Function theory, by grouping elements into categories, represents processes rather than objects \cite{lawvere1963,lawvere2009}, resulting in a unified recurrent form of adjoint functors that formalize semantics as a functor capable of providing new characterizations of abstract categories. According to Clark \cite[p.47]{clark1998}, mind, body, and world act as equal partners in determining cognition or behavior, which aligns with neuroscience \cite{per07}.


In recent years, category theory has increasingly been used to study semantic structure, cognition, and compositional systems. 
Several recent contributions (2023--2025) demonstrate the growing importance of categorical methods for modeling meaning, representation, and reasoning. Quigley develops a categorical framework for extensional and intensional models in formal semantics, showing that trivial intensional models are equivalent to extensional ones within a typed categorical setting \cite{Quigley2024}. 
This work provides a precise bridge between classical formal semantics and categorical structure. In his paper, Phillips proposes a category-theoretic interpretation of the Language of Thought hypothesis and shows how presheaves and universal constructions can model relationships between symbolic and non-symbolic representations \cite{Phillips2024}. 
The paper provides an explicit cognitive interpretation of categorical structure.
Prentner argues that category theory offers a structural framework for consciousness science that goes beyond purely correlational models \cite{Prentner2024}. 
The work highlights how categorical relations can capture explanatory structures in cognitive systems. Lambert develops a topos-theoretic semantics for intuitionistic modal logic and applies it to branching spacetime models \cite{Lambert2024}. 
This work demonstrates how contextual and modal notions of truth can be modeled within presheaf topoi.
Lewis investigates compositional vector semantics from a categorical perspective and proposes methods for grounding compositional representations in neural architectures \cite{Lewis2024}. 
The paper connects categorical semantics with cognitively plausible machine learning models.
Wojtowicz proposes logic-based artificial intelligence algorithms operating within categorical semantic structures \cite{Wojtowicz2025}. 
The work demonstrates how categorical semantics can support reasoning procedures beyond classical symbolic frameworks.
Bakirtzis et al.\ develop categorical semantics for compositional reinforcement learning using categorical representations of Markov decision processes \cite{Bakirtzis2025}. 
Their work illustrates how categorical structure supports modular reasoning in adaptive systems. Mahadevan explores the potential of topos theory as a foundation for generative AI and large language models \cite{Mahadevan2025}. 
In this  paper it is suggested that richer categorical structures may improve contextual reasoning and compositional representation in AI.

In this paper we define a  theoretical framework that  integrates logical reasoning, functional transformations, categorical structure, and contextual dynamics into a unified mathematical representation of cognition. Logical inference provides the initial symbolic structure of reasoning, while cognitive transformations update representations through contextual interaction. Equivalence relations identify invariant semantic structures across representational variations, and categorical mappings organize these transformations into compositional systems. The contextual logic of a topos ensures coherence across different interpretative contexts, while temporal dynamics model the evolution of cognitive states over time.

The dynamical model proposed in this article describes cognition as an iterative process governed by the update rule
\beq
\label{xi1}
\framebox{$
X_{t+1} = \pi \big( F(f(X_t)) \big),
$}
\eeq
where $\pi : X \to X/{\sim}$ denotes the canonical projection onto equivalence classes (alternatively \eq{xi1} could be also written as $X_{t+1} = F(f(X_t)) / \sim$). 

Here, contextual transformations, semantic mappings, and equivalence relations interact to stabilize meaning over time.

\begin{remark}[Cybernetic interpretation]
	The update rule can be interpreted as a feedback system in the sense of cybernetics.
	
	The transformation $f$ represents the internal evolution of cognitive states, while $F$ acts as an interpretative or measurement operator. The projection $\pi$ enforces a form of semantic normalization by identifying equivalent representations.
	
	Thus, the overall system can be viewed as a closed feedback loop:
	\[
	X_t \;\longrightarrow\; f(X_t) \;\longrightarrow\; F(f(X_t)) \;\longrightarrow\; \pi(F(f(X_t))) \;\longrightarrow\; X_{t+1}.
	\]
	
	This structure reflects a fundamental cybernetic principle: the interaction between transformation, observation, and stabilization through feedback.
\end{remark}

Many artificial intelligence systems rely primarily on statistical pattern recognition and static symbolic representations. 
The structural framework proposed here suggests that robust cognitive systems must integrate logical inference, 
semantic invariance, and contextual updating within a unified mathematical structure.

The remainder of this paper is organized as follows: section 2 deals with some preliminary remarks, cognitive dynamics and temporal inlfuence is studied in section 3, sect. 4 investigates topos and sheaves for a topos theorical model temporal. The temporal dynamics is studied in section 5. In sect. 6 the axiomatic framework of the structural theory of cognition is given, while in sect. 7 several theorems on the property of the moel and stabilization   are shown. Sec. 8 studies the adjunction of this model by showing an universal property and at last in sect. 9 is explicitly given an extended application by showing formalization of this model.


\section{Categorial Representation of Cognition}

Semantics is the branch of linguistics and logic concerned with the study of meaning in language and symbolic systems. It investigates how expressions, sentences, and larger linguistic structures are interpreted and how they relate to objects, situations, or concepts in the world \cite{ ChierchiaMcConnellGinet2000,Lyons1977}. In formal approaches, semantics is typically modeled by assigning interpretations to syntactic expressions through mathematical structures that determine their truth conditions or conceptual content \cite{DowtyWallPeters1981}. From a cognitive perspective, semantics can therefore be understood as the structured mapping between representations and meanings that allows linguistic expressions to convey information and support inference.
	
Category theory provides a natural framework for representing transformations and invariants.
	In category theory, a \emph{category} consists of a collection of objects together with morphisms (or arrows) between them, representing relationships or transformations between objects. Formally, a category $\mathcal{C}$ is specified by a class of objects, a class of morphisms between objects, an associative composition law for morphisms, and identity morphisms for each object \cite{lawvere2009,MacLane1971}. Categories provide an abstract framework for studying structures and the mappings between them, allowing mathematical systems to be analyzed in terms of compositional transformations rather than the internal properties of individual objects.
	
A category $\mathcal{C}$ consists of: a class of objects,
  a class of morphisms between objects,
  an associative composition law for morphisms,
	  identity morphisms for each object.

	Thus cognition may be modeled as a category in which mental representations are objects and cognitive operations are morphisms.

Let \(A\) and \(B\) be sets of cognitive states. A transformation between them is represented by a function
\[
f : A \to B.
\]
 Thus cognition is modeled not as a collection of static propositions but as a system of transformations.


	In category theory, a \emph{morphism} (or arrow) represents a structured relationship or transformation between two objects of a category. 
	Given a category $\mathcal{C}$, a morphism
	
	\[
	f : X \rightarrow Y
	\]
	
	denotes a mapping from the object $X$ to the object $Y$. Morphisms can be composed whenever the codomain of one morphism coincides with the domain of another, and this composition is associative. Each object also possesses an identity morphism
	
\beq
\label{fxy}
	\mathrm{id}_X : X \rightarrow X
\eeq
		that acts as a neutral element for composition \cite{Awodey2010,MacLane1971}. Morphisms therefore capture the structural transformations that connect objects within a category.

\begin{remark}
	
Cognition often treats distinct representations as equivalent in meaning. 
Mathematically this corresponds to an equivalence relation
$
x \sim y
$
satisfying reflexivity, symmetry, and transitivity. Equivalence relations group representations into semantic classes that remain invariant across contextual variations.
\end{remark}

\subsection{Functorial Semantic Mapping}

In category theory, a \emph{functor} is a structure-preserving mapping between categories. 
Given two categories $\mathcal{C}$ and $\mathcal{D}$, a functor

\[
F : \mathcal{C} \rightarrow \mathcal{D}
\]
assigns to each object $X$ of $\mathcal{C}$ an object $F(X)$ of $\mathcal{D}$, and to each morphism 
$f : X \rightarrow Y$ in $\mathcal{C}$ a morphism

\[
F(f) : F(X) \rightarrow F(Y)
\]

in $\mathcal{D}$, such that identity morphisms and composition are preserved. 
In particular,

\[
F(\mathrm{id}_X) = \mathrm{id}_{F(X)}, 
\qquad
F(g \circ f) = F(g) \circ F(f)
\]
for all composable morphisms $f$ and $g$ in $\mathcal{C}$ \cite{Awodey2010,MacLane1971}. 
Functors therefore provide the basic mechanism for translating structures and relationships from one category into another.

In cognitive modeling:

\begin{itemize}
\item $\mathcal{C}$ represents cognitive representations
\item $\mathcal{D}$ represents semantic structures
\end{itemize}

Functors preserve structural relationships between representations.

	A \emph{contravariant functor} is a mapping between categories that reverses the direction of morphisms. 
	Given two categories $\mathcal{C}$ and $\mathcal{D}$, a contravariant functor
	
	\[
	F : \mathcal{C}^{op} \rightarrow \mathcal{D}
	\]
	
	assigns to each object $X$ of $\mathcal{C}$ an object $F(X)$ of $\mathcal{D}$, and to each morphism 
$
	f : X \rightarrow Y
$
	in $\mathcal{C}$ a morphism
	\[
	F(f) : F(Y) \rightarrow F(X)
	\]
	in $\mathcal{D}$, reversing the direction of the original morphism. 
	This  functor preserves identities but  reverses the order of composition, so that
	
	\[
	F(\mathrm{id}_X) = \mathrm{id}_{F(X)}, 
	\qquad
	F(g \circ f) = F(f) \circ F(g)
	\]
		for all composable morphisms $f$ and $g$ in $\mathcal{C}$ \cite{Awodey2010,MacLane1971}. 
	Contravariant functors play an important role in many areas of mathematics, particularly in the theory of presheaves, where structures defined on a category vary contravariantly with respect to morphisms.


\subsection{Cybernetic Interpretation and Computational Illustration}

\subsubsection{System interpretation}

The proposed model can be interpreted as a discrete-time cybernetic system in which cognitive states evolve under feedback.

In this perspective:
\begin{itemize}
	\item $f$ represents the internal processing dynamics,
	\item $F$ corresponds to an interpretative or evaluative mechanism,
	\item $\pi$ enforces equivalence by collapsing states into invariant semantic classes.
\end{itemize}

The system combines transformation and regulation, where the equivalence relation acts as a stabilizing mechanism.

\subsubsection{Computational example}

To illustrate the model, consider the case $X = \mathbb{R}$ and define:
\[
f(x) = x + \alpha \sin(x), \qquad F(x) = \tanh(x),
\]
with $\alpha > 0$.

Define the equivalence relation:
\[
x \sim y \quad \Longleftrightarrow \quad |x - y| < \varepsilon,
\]
for some $\varepsilon > 0$.

The iterative system becomes:
\[
X_{t+1} = \pi\big(\tanh(X_t + \alpha \sin(X_t))\big).
\]

This system exhibits the following properties:
\begin{itemize}
	\item bounded dynamics due to the saturation of $\tanh$,
	\item nonlinear oscillatory behavior induced by $\sin(x)$,
	\item stabilization through equivalence classes.
\end{itemize}

Depending on $\alpha$ and $\varepsilon$, the system may converge to fixed points or exhibit bounded fluctuations.

\subsubsection{Algorithmic implementation}

The model can be implemented through the following iterative procedure:

\begin{enumerate}
	\item Initialize $X_0 \in \mathbb{R}$,
	\item Compute $Y_t = f(X_t)$,
	\item Compute $Z_t = F(Y_t)$,
	\item Set $X_{t+1} = \pi(Z_t)$,
	\item Repeat until convergence.
\end{enumerate}

This demonstrates that the proposed framework is not purely abstract but admits concrete computational realization.


\section{Categorical Diagram of Cognitive Dynamics}


Let $X$ be a set and let $\sim$ be an equivalence relation on $X$, that is, a binary relation that is reflexive, symmetric, and transitive. 
The \emph{quotient set} of $X$ by the relation $\sim$, denoted

\[
X / \sim,
\]

is the set of equivalence classes of elements of $X$ under $\sim$ \cite{Halmos1960, MacLane1971}. 
Each element $x \in X$ determines an equivalence class

\[
[x] = \{\, y \in X \mid y \sim x \,\},
\]

and the quotient set is defined as

\[
X/\sim = \{\, [x] \mid x \in X \,\}.
\]

Intuitively, the quotient construction identifies elements that are considered equivalent under the relation $\sim$, thereby forming a new set whose elements represent classes of indistinguishable objects.	

 We show that the structural dynamics of cognition can be summarized by the update rule \eq{xi1}, and 
 the transformation process can be represented diagrammatically as

\[
\begin{tikzcd}
X_t \arrow[r,"f"] & f(X_t) \arrow[r,"F"] & F(f(X_t)) \arrow[r,"/\sim"] & X_{t+1}
\end{tikzcd}
\]

This diagram shows cognition as a sequence of compositional transformations.


\subsection{Temporal Dynamics and Coalgebra}

In order to show the validity of \eq{xi1} we need to define the influence of temporal dynamics on the semantic structure. 

	In category theory, a \emph{coalgebra} provides a mathematical framework for describing state-based dynamical systems. 
	Given an endofunctor 
	
	\[
	F : \mathcal{C} \rightarrow \mathcal{C},
	\]
	an $F$-coalgebra consists of an object $X$ of the category $\mathcal{C}$ together with a morphism
	
	\[
	\gamma : X \rightarrow F(X).
	\]
	
	The object $X$ represents the set of states of the system, while the structure map $\gamma$ describes how each state evolves or is observed through the functor $F$ \cite{Rutten2000}. Coalgebras are widely used to model evolving structures such as transition systems, streams, and dynamical processes, providing a categorical dual to algebraic constructions \cite{Rutten2000}. From this perspective, coalgebra offers a natural mathematical language for describing systems whose behavior is determined by transitions between states.

Cognitive systems evolve over time through state transitions. 
A dynamic cognitive system can be modeled coalgebraically as

$$
X \to F(X)
$$

where $X$ represents cognitive states and $F$ encodes the update rules governing cognitive evolution.


\section{Topos and Presheaf }

The category $\mathbf{Set}$ is the category whose objects are sets and whose morphisms are functions between sets. 
More precisely, for any two sets $X$ and $Y$, a morphism  
$
f : X \rightarrow Y
$
in $\mathbf{Set}$ is a function assigning to each element of $X$ a unique element of $Y$. 
Composition of morphisms corresponds to the usual composition of functions, and for each set $X$ there exists an identity morphism

\[
\mathrm{id}_X : X \rightarrow X
\]
that maps every element to itself \cite{Awodey2010,MacLane1971}. 
The category $\mathbf{Set}$ plays a fundamental role in category theory, serving as a canonical example of a category and as the target category for many functors, including presheaves of the form
$$
P : \mathcal{C}^{op} \rightarrow \mathbf{Set}.
$$
So that, a \emph{presheaf} on a category $\mathcal{C}$ is a contravariant functor
$
P : \mathcal{C}^{op} \rightarrow \mathbf{Set}.
$
This means that to each object $U$ of $\mathcal{C}$ the presheaf assigns a set $P(U)$, and to each morphism
\[
f : V \rightarrow U
\]
in $\mathcal{C}$ it assigns a restriction map
\[
P(f) : P(U) \rightarrow P(V),
\]
such that identity morphisms and compositions are preserved \cite{Johnstone2002,MacLaneMoerdijk1994}. Intuitively, a presheaf assigns data to each object of the category together with consistent restriction maps that relate the data across morphisms. Presheaves play a central role in modern geometry and topos theory, where they are used to represent structures that vary across contexts.

Sheafification is the process that associates to a presheaf the closest sheaf satisfying the gluing and locality conditions imposed by a Grothendieck topology. More precisely, given a site $(\mathcal{C},J)$ and a presheaf 
$
P : \mathcal{C}^{op} \rightarrow \mathbf{Set},
$
the \emph{sheafification} of $P$ is a sheaf $a(P)$ together with a natural morphism
\[
\eta : P \rightarrow a(P)
\]
such that $a(P)$ is universal among sheaves receiving a morphism from $P$ \cite{Johnstone2002,MacLaneMoerdijk1994}. In categorical terms, sheafification defines a functor
\[
a : \widehat{\mathcal{C}} \rightarrow \mathbf{Sh}(\mathcal{C},J)
\]
that is left adjoint to the inclusion of the category of sheaves into the category of presheaves. Intuitively, sheafification enforces the coherence conditions required for local data to determine consistent global structures.

\begin{remark}
		The  \emph{Grothendieck topology} provides a categorical generalization of the notion of open coverings in topology. 
		Given a category $\mathcal{C}$, a Grothendieck topology assigns to each object $U$ of $\mathcal{C}$ a collection of families of morphisms	
		\beq
		\label{fiui}
		\{\, f_i : U_i \rightarrow U \,\}
		\eeq
		called \emph{covering families}, which satisfy stability and compatibility conditions analogous to those of open covers in classical topology \cite{Johnstone2002,MacLaneMoerdijk1994}. 
		These conditions ensure that coverings are stable under pullback and that coverings can be refined by other coverings. 
		Grothendieck topologies allow the definition of sheaves on arbitrary categories and play a fundamental role in the construction of topoi \cite{MacLaneMoerdijk1994}.
		Let $\mathcal{C}$ be a category of contexts (or cognitive situations).
		Equipping $\mathcal{C}$ with a Grothendieck topology $J$ yields a site $(\mathcal{C},J)$.
		The covering family \eq{fiui}
		is interpreted as a collection of locally accessible perspectives that jointly determine the global situation $U$.
		This formalizes the principle that cognition often proceeds by assembling coherent global interpretations from locally partial evidence.
		In comparison,  the  \emph{Boolean algebra} is a complemented distributive lattice that provides an algebraic structure for classical propositional logic. Formally, a Boolean algebra is a set $B$ equipped with two binary operations, meet $(\wedge)$ and join $(\vee)$, a unary complement operation $(\neg)$, and two distinguished elements $0$ and $1$, representing the least and greatest elements respectively. These operations satisfy the laws of commutativity, associativity, distributivity, identity, and complementation \cite{DaveyPriestley2002,Halmos1963}. Boolean algebras serve as the algebraic semantics of classical logic, where propositions take values in $\{0,1\}$ corresponding to false and true, and logical operations correspond to algebraic operations within the lattice.
\end{remark}

In category theory, a \emph{topos} is a category that behaves in many respects like the category of sets while supporting an internal logical structure. More precisely, a topos is a category possessing finite limits, exponentials, and a subobject classifier, which together allow the interpretation of logical propositions within the category itself \cite{MacLaneMoerdijk1994, Johnstone2002}. Because of this internal logic, topoi provide a natural mathematical framework for modeling structures in which truth and validity may depend on contextual conditions \cite{bell2008}.

An important example is the \emph{presheaf topos}. Given a small category $\mathcal{C}$, the category

\[
\mathbf{Set}^{\mathcal{C}^{op}}
\]
of contravariant functors from $\mathcal{C}$ to the category of sets is called the presheaf topos over $\mathcal{C}$. Objects of this category assign a set of elements to each object of $\mathcal{C}$ together with restriction maps along morphisms of $\mathcal{C}$ \cite{MacLaneMoerdijk1994}. Presheaf topoi are particularly useful for representing structures that vary across contexts, since each object of $\mathcal{C}$ may be interpreted as a context and the associated presheaf assigns the corresponding set of admissible states or interpretations.

\subsection{A Worked Cognitive Example in a Presheaf Topos}
\label{sec:worked_presheaf_example}

In cognitive science and linguistics, context refers to the set of linguistic, situational, and background factors that influence the interpretation of an expression or event. Context provides the information necessary to disambiguate meanings, guide inference, and constrain possible interpretations \cite{SperberWilson1995,Clark1996}. More formally, context may be understood as the collection of assumptions, prior knowledge, and environmental cues that interact with a representation in order to determine its interpretation \cite{Levinson2000}. From a structural perspective, context acts as a system of constraints that restricts the space of admissible interpretations, progressively reducing ambiguity during the interpretative process.

To demonstrate that the proposed framework is not merely programmatic, we give a worked example inside a concrete topos.
A standard and widely used choice is a presheaf topos
\[
\widehat{\mathcal{C}} := \mathbf{Set}^{\mathcal{C}^{\mathrm{op}}},
\]
where $\mathcal{C}$ is a small category of contexts ordered by refinement. Presheaf topoi provide a canonical setting for contextual semantics and “varying sets” \cite{Johnstone2002,MacLaneMoerdijk1994}.


For the concrete application or our model we need to use  ordered sets that can be represented by the so-called poset-categories. A poset-category (or posetal category) is a category obtained from a partially ordered set (poset) by interpreting the order relation as morphisms between elements. More precisely, given a poset $(P,\leq)$, one may construct a category $\mathcal{C}$ whose objects are the elements of $P$ and for which there exists a unique morphism
$
x \rightarrow y
$
whenever $x \leq y$ \cite{MacLane1971, Awodey2010}. Composition of morphisms corresponds to the transitivity of the order relation, and identity morphisms correspond to reflexivity. Poset-categories provide a simple but important class of categories and are often used to model hierarchical structures or systems of contexts in which objects are related by refinement or inclusion.

Let $\mathcal{C}$ be the poset-category generated by three contexts
\[
C_0 \le C_1 \le C_2,
\]
with morphisms $C_2 \to C_1 \to C_0$ representing forgetting/pruning of contextual information (coarsening).
Interpretation:
\begin{itemize}
	\item $C_0$: minimal context (isolated word);
	\item $C_1$: sentence-level context;
	\item $C_2$: discourse/pragmatic context.
\end{itemize}


\subsubsection{Example}

Define a presheaf of admissible interpretations
\[
\mathcal{S} : \mathcal{C}^{\mathrm{op}} \to \mathbf{Set}
\]
for the ambiguous lexical item \emph{bank}:
\begin{align*}
\mathcal{S}(C_0) &= \{\textsf{financial},\ \textsf{river}\},\\
\mathcal{S}(C_1) &= \{\textsf{financial}\},\\
\mathcal{S}(C_2) &= \{\textsf{financial}\}.
\end{align*}
The restriction maps (contravariant action) enforce coherence under refinement:
\[
\rho_{10}:\mathcal{S}(C_0)\to \mathcal{S}(C_1), \qquad 
\rho_{21}:\mathcal{S}(C_1)\to \mathcal{S}(C_2),
\]
where $\rho_{10}$ discards the \textsf{river} reading once sentence-level context is available (e.g., {\it I went to the bank to open an account}).


Within $\widehat{\mathcal{C}}$, propositions are represented by subobjects; truth becomes context-indexed via the subobject classifier $\Omega$ \cite{bell2008,Johnstone2002}.
Consider the proposition:
\[
P := \text{``\emph{bank} refers to a financial institution.''}
\]
Then the characteristic morphism
\[
\chi_P : \mathcal{S} \to \Omega
\]
assigns a (Heyting-valued) truth at each context. Informally:
\begin{itemize}
	\item at $C_0$: $P$ is not decidable (two admissible readings);
	\item at $C_1, C_2$: $P$ is validated (only \textsf{financial} remains).
\end{itemize}
This realizes, inside a topos, the idea that semantic truth is not globally Boolean but depends on the available contextual stage \cite{MacLaneMoerdijk1994,bell2008}.


The transformation can be visualized by the following diagrams 
\[
\begin{tikzcd}[column sep=large]
C_2 \arrow[r] \arrow[d] & C_1 \arrow[d] \\
C_1 \arrow[r] & C_0
\end{tikzcd}
\qquad
\begin{tikzcd}[column sep=large]
\mathcal{S}(C_2)=\{\textsf{financial}\} \arrow[r,"\rho_{21}"] \arrow[d,equal] 
& \mathcal{S}(C_1)=\{\textsf{financial}\} \arrow[d,"\rho_{10}"] \\
\{\textsf{financial}\} \arrow[r] & \{\textsf{financial},\textsf{river}\}
\end{tikzcd}
\]

This example shows how lexical ambiguity resolution can be modeled as a \emph{context-refinement} phenomenon internal to a presheaf topos, rather than as an ad hoc external rule.

\begin{remark}
	Let us remind that the  \emph{Heyting algebra} is a bounded lattice equipped with an implication operation that models the logic of intuitionistic reasoning. 
		Formally, a Heyting algebra is a partially ordered set $(H,\leq)$ with finite meets and joins, together with a greatest element $1$, a least element $0$, and a binary operation	
		\[
		\rightarrow : H \times H \rightarrow H
		\]	
		such that for all $a,b,c \in H$,		
		\[
		a \wedge b \leq c 
		\quad \text{if and only if} \quad 
		a \leq (b \rightarrow c).
		\]	
		This condition characterizes the implication operator as the right adjoint of the meet operation \cite{Johnstone2002,MacLaneMoerdijk1994}. 
		Heyting algebras provide the algebraic semantics of intuitionistic logic and arise naturally in topos theory, where the lattice of subobjects of an object forms a Heyting algebra \cite{bell2008}.
\end{remark}


\subsection{A Topos-Theoretic Model of Contextual Cognition and coherence}
\label{sec:topos_contextual_cognition}

Category theory represents cognition as compositional transformation, but it does not by itself fix the logic governing truth and validity.
A topos provides a universe of sets with internal logic, typically intuitionistic, allowing truth to be modeled as context-sensitive rather than globally Boolean \cite{bell2008, Johnstone2002,MacLaneMoerdijk1994}.
This is structurally aligned with cognitive phenomena such as ambiguity, partial information, and defeasible interpretation.

A presheaf $P:\mathcal{C}^{op}\to\mathbf{Set}$ assigns to each context $U$ a set of admissible representations $P(U)$.
A \emph{sheaf} $\mathcal{F}$ enforces coherence under covers: compatible local representations glue uniquely to a global one \cite{Johnstone2002,MacLaneMoerdijk1994}.
In cognitive terms:
\begin{itemize}
	\item {\it Local consistency}: interpretations agree on overlaps;
	\item  {\it Global coherence}: compatible local interpretations determine a unified meaning.
\end{itemize}

 Sheafification behaves as concept/meaning stabilization.
When a presheaf contains fragmented or incompatible assignments, the sheafification functor
\[
a:\widehat{\mathcal{C}}\to \mathbf{Sh}(\mathcal{C},J)
\]
produces the closest coherent sheaf \cite{MacLaneMoerdijk1994}.
We interpret this as a mathematical idealization of cognitive stabilization:
	{ \emph Concepts (or meanings) arise as coherent gluing of locally constrained interpretations.}

In any topos, predicates correspond to subobjects and are classified by morphisms into the subobject classifier $\Omega$ \cite{Johnstone2002}.
Thus, “truth values” form a Heyting algebra rather than a Boolean algebra.
Cognitively, this models graded or context-indexed validity (e.g., “supported,” “undetermined,” “ruled out”) without forcing premature bivalence \cite{bell2008} as shown in the following diagram

\[
\begin{tikzcd}[column sep=large,row sep=large]
\widehat{\mathcal{C}}=\mathbf{Set}^{\mathcal{C}^{op}} \arrow[r,"a\ \text{(sheafification)}"] \arrow[d,swap,"\text{raw/fragmented}"] 
& \mathbf{Sh}(\mathcal{C},J) \arrow[d,"\text{coherent meaning}"] \\
P \arrow[r, dashed] & a(P)
\end{tikzcd}
\]

This expresses, in universal form, the transition from unconstrained representational variability to coherence under contextual constraints.


\section{Temporal and Dynamical Topoi for Cognitive Updating}
\label{sec:temporal_dynamical_topoi}

The fundamental property of our structural model of cognition  is the temporal  updating. In fact, 
cognition is a dynamical process,  intrinsically temporal: meanings drift, beliefs are revised, and interpretations stabilize or decay.
To avoid treating time as an external index, one may internalize temporal structure within a topos \cite{Johnstone2002, MacLaneMoerdijk1994}.
A canonical temporal topos is the \emph{topos of trees}
\[
\mathcal{T} := \mathbf{Set}^{\mathbb{N}^{op}},
\]
used widely in semantics of guarded recursion and step-indexed models \cite{BirkedalEtAl2011}.
 It plays a central role in modeling temporal, staged, and hierarchical structures, and has applications in logic, computer science, and semantics \cite{MacLaneMoerdijk1994,Awodey2010}.

It is defined as follows. 
Let $\mathbb{N}$ be the category whose objects are natural numbers
\[
0,1,2,\dots
\]
and where there exists a unique morphism
\[
n \to m \quad \text{if and only if} \quad n \le m.
\]

Then $\mathbb{N}$ is a poset category. Its opposite category $\mathbb{N}^{op}$ reverses the arrows, so that there is a morphism
\[
n \to m \quad \text{in } \mathbb{N}^{op} \quad \text{iff} \quad m \le n.
\]

The topos of trees is defined as the category of presheaves
\[
\mathbf{Set}^{\mathbb{N}^{op}} = \mathrm{Fun}(\mathbb{N}^{op}, \mathbf{Set}).
\]

An object $X$ in this category assigns:
\begin{itemize}
	\item to each $n \in \mathbb{N}$ a set $X_n$,
	\item to each morphism $n \to m$ (i.e.\ $m \le n$) a restriction map
	\[
	r_{n,m} : X_n \to X_m.
	\]
\end{itemize}

These maps satisfy the compatibility conditions:
\[
r_{n,n} = \mathrm{id}, 
\qquad
r_{m,k} \circ r_{n,m} = r_{n,k}.
\]

The structure of an object in $\mathbf{Set}^{\mathbb{N}^{op}}$ can be visualized as a tree.
 Elements of $X_n$ represent states or information available at level $n$.
 The restriction maps $r_{n,m}$ project information from finer levels to coarser ones. Thus, an element at a higher level determines a consistent family of approximations at all lower levels:
\[
x_n \mapsto (x_{n-1}, x_{n-2}, \dots, x_0).
\]

This hierarchical unfolding is why the category is called the \emph{topos of trees}.

The topos $\mathbf{Set}^{\mathbb{N}^{op}}$ naturally models temporal or staged processes: $n$ represents a stage, time step, or level of refinement,
  $X_n$ represents the cognitive or semantic state at stage $n$,
  restriction maps describe how future states determine past approximations.

In particular, a sequence
\[
x_0 \leftarrow x_1 \leftarrow x_2 \leftarrow \cdots
\]
represents a progressive refinement of meaning or cognition.

This makes the topos of trees particularly suitable for modeling:
 evolving semantic interpretations,
  context refinement,
  learning processes,
  recursive cognitive updates.

Like every topos, $\mathbf{Set}^{\mathbb{N}^{op}}$ has an internal intuitionistic logic \cite{MacLaneMoerdijk1994}. 
Truth values vary across levels, forming a Heyting algebra of stage-dependent truth.

Thus, a proposition is not simply true or false, but may evolve across stages:
\[
\varphi_0 \le \varphi_1 \le \varphi_2 \le \cdots
\]

This reflects the fact that knowledge and meaning may become more precise over time.

The cognitive update \eq{xi1}
can be interpreted internally in $\mathbf{Set}^{\mathbb{N}^{op}}$ as a transformation acting across stages.

Each stage $n$ corresponds to a level of refinement, and the restriction maps ensure coherence between stages. 
Thus, cognition may be viewed as a trajectory inside the topos of trees, where semantic meaning unfolds progressively and stabilizes across levels.

The topos of trees provides a natural mathematical setting in which:
 cognition is inherently temporal,
  meaning is context-dependent,
  knowledge evolves through refinement,
  and semantic stability emerges as coherence across levels.

This makes $\mathbf{Set}^{\mathbb{N}^{op}}$ a particularly suitable framework for modeling dynamical and hierarchical aspects of cognition.

An object $X\in \mathcal{T}$ consists of sets $X(n)$ (states at time $n$) and restriction maps
\[
r_{n+1,n}:X(n+1)\to X(n)
\]
encoding temporal coherence: later states restrict to consistent earlier approximations.
This naturally models stability of memory traces and diachronic constraint.

A learning/update process is an internal endomorphism
\[
L:X\to X
\]
in $\mathcal{T}$, with components $L_n:X(n)\to X(n)$ coherent with restrictions.
Stabilization corresponds to fixed points (when they exist):
\[
L(X)\cong X.
\]
This matches the conceptual role  assigned to recursion and stabilization in cognition.

A general state-based dynamic system can be expressed coalgebraically as
\[
X \to F(X),
\]
where $F$ encodes observations and next-state structure.
Coalgebra provides a mathematically clean bridge between dynamical systems and state-based computation.
Placed inside a temporal topos, it supports evolution with partiality and staged validity.

\[
\begin{tikzcd}[column sep=large,row sep=large]
X(n+1) \arrow[r,"L_{n+1}"] \arrow[d,"r_{n+1,n}"'] 
& X(n+1) \arrow[d,"r_{n+1,n}"] \\
X(n) \arrow[r,"L_n"'] & X(n)
\end{tikzcd}
\]

Commutativity expresses that learning/update respects temporal coherence.

Temporal topoi are particularly efficient since they unify 
  evolving representations ($X(n)$),
  coherence across time (restriction maps),
  learning (endomorphism $L$),
  state-based dynamics (coalgebra $X\to F(X)$),
within one categorical semantics \cite{BirkedalEtAl2011}.

\subsection{Illustrative dynamical graph of cognitive updating}

To provide a concrete visualization of the proposed update mechanism, consider the scalar  model
\begin{equation}
x_{t+1}=\Phi(x_t):=\beta \tanh\!\bigl(x_t+\alpha \sin x_t\bigr),
\label{eq:toy_dynamics}
\end{equation}
where $\alpha>0$ controls nonlinear contextual modulation and $0<\beta<1$ is a damping parameter.
The term $\sin x_t$ models nonlinear context-dependent perturbation, while the saturating nonlinearity
$\tanh$ reflects bounded semantic activation.

Figure~\ref{fig:dynamical_graph} shows the graph of the map $y=\Phi(x)$ together with the diagonal
$y=x$. Intersections correspond to fixed points of the cognitive update rule, that is, stabilized
interpretations.

\begin{figure}[htbp]
	\centering
	\begin{tikzpicture}
	\begin{axis}[
	width=11cm,
	height=7cm,
	domain=-3:3,
	samples=300,
	xlabel={$x_t$},
	ylabel={$x_{t+1}$},
	legend style={at={(0.98,0.02)},anchor=south east},
	grid=both,
	xmin=-3, xmax=3,
	ymin=-1.2, ymax=1.2
	]
	\addplot[thick, blue] {0.6*tanh(deg(x + 0.8*sin(deg(x))))};
	\addlegendentry{$x_{t+1}=\Phi(x_t)$}
	
	\addplot[thick, dashed] {x};
	\addlegendentry{$x_{t+1}=x_t$}
	\end{axis}
	\end{tikzpicture}
	\caption{Dynamical graph of the  cognitive update map $\Phi(x)=0.6\tanh(x+0.8\sin x)$. Fixed points are given by intersections with the diagonal.}
	\label{fig:dynamical_graph}
\end{figure}

From a cognitive viewpoint, the graph illustrates three structural features of the model:
(i) boundedness of semantic activation, 
(ii) nonlinear contextual modulation, and
(iii) convergence toward stable interpretative states when iterates approach an attracting fixed point.


\paragraph{Connection with the Master Equation of Cognition.}

The presheaf example, the topos-theoretic framework, and the temporal dynamics developed in the previous sections can be unified through a single structural representation of cognitive updating. In the presheaf model, contextual refinement reduces interpretative ambiguity by restricting admissible representations. The topos-theoretic formulation generalizes this mechanism by interpreting cognitive representations as objects in a category endowed with an internal logic, allowing truth values to vary across contexts \cite{MacLaneMoerdijk1994,bell2008}. Temporal topoi further extend this perspective by modeling the evolution of representations across time through coherent restriction maps and internal update morphisms \cite{BirkedalEtAl2011}. These three perspectives correspond respectively to contextual refinement, semantic stabilization, and temporal evolution of cognitive states. Together they motivate the structural update rule of the master Eq.\eq{xi1}
where $X_t$ denotes the cognitive state at time $t$, $f$ represents cognitive transformations acting on representations, $F$ maps these transformations into semantic structures through functorial interpretation, and the equivalence relation $\sim$ identifies invariant meanings across representational variations. The master equation therefore summarizes the categorical architecture of cognition: representations evolve through compositional transformations, semantic invariants emerge through equivalence relations, and contextual coherence is enforced by the internal logic of the surrounding topos.




\section{Mathematical Framework of the Structural Theory of Cognition}

The structural theory proposed in this article models cognition as a system of transformations acting on representations under contextual and temporal constraints. The framework may be summarized through the following five axioms.

\begin{description}
	\item[Axiom 1: Logical Inference]   
	  Cognitive reasoning includes a logical component represented by propositional implication
\[
P \Rightarrow Q ,
\]
where $P$ denotes a hypothesis and $Q$ denotes a conclusion. Logical inference provides the symbolic structure through which information can be derived from previously established statements. This component corresponds to classical logical reasoning embedded within linguistic structures.

\item[Axiom 2: Transformational Representation]
Cognitive processes operate through transformations between representational states. Let $A$ and $B$ denote sets of cognitive states. A cognitive transformation is represented by a function
\[
f : A \rightarrow B .
\]
Thus cognition is modeled not as a static collection of propositions but as a system of transformations mapping one representational state into another.

\item[Axiom 3: Semantic Invariance]
Distinct representations may correspond to the same meaning. This property is modeled by an equivalence relation
\[
x \sim y ,
\]
satisfying reflexivity, symmetry, and transitivity. The equivalence relation groups representations into semantic classes that remain invariant under contextual variation.

\item[Axiom 4: Categorical Structure]
Cognitive representations and transformations form a category $\mathcal{C}$ whose objects are cognitive states and whose morphisms are cognitive transformations. Semantic interpretation is represented by a functor
\[
F : \mathcal{C} \rightarrow \mathcal{D},
\]
mapping cognitive representations to semantic structures while preserving compositional relationships between transformations.

\item[Axiom 5: Temporal Dynamics]
Cognition evolves over time through state transitions. Let $X_t$ denote the cognitive state at time $t$. The structural update rule governing cognitive evolution is given by

\[
X_{t+1} = F(f(X_t)) / \sim .
\]

Here $f$ represents cognitive transformation, $F$ represents functorial semantic mapping, and $\sim$ identifies invariant meanings across representations. This update rule describes cognition as a dynamic process in which logical inference, contextual interpretation, and semantic abstraction interact to produce evolving states of understanding.
\end{description}

Together, these axioms describe cognition as a compositional system of transformations evolving through contextual interaction while preserving semantic invariants. Logical reasoning provides symbolic structure, functions model cognitive transformations, equivalence relations capture semantic invariance, category theory organizes compositional relationships, and temporal dynamics represent the evolution of cognitive states.

This framework  may be summarized by the following categorical architecture:

\[
\begin{tikzcd}[column sep=huge]
X_t
\arrow[r, "f"]
& f(X_t)
\arrow[r, "F"]
& F(f(X_t))
\arrow[r, "/\sim"]
& X_{t+1}
\end{tikzcd}
\]

\[
\begin{tikzcd}[column sep=huge]
\mathcal{C}
\arrow[r, "F"]
& \mathcal{D}
\end{tikzcd}
\qquad
\begin{tikzcd}[column sep=huge]
X
\arrow[r]
& F(X)
\end{tikzcd}
\]

Here, \(X_t\) denotes the cognitive state at time \(t\), \(f\) represents cognitive transformation, \(F\) denotes functorial semantic mapping, and \(\sim\) identifies semantically equivalent representations. The category \(\mathcal{C}\) represents cognitive states and transformations, while \(\mathcal{D}\) represents semantic structures. The coalgebraic transition \(X \to F(X)\) expresses the temporal dynamics of cognitive updating.

In this representation, logical reasoning generates initial symbolic structures that are subsequently transformed through cognitive operations. Functorial mappings translate these structures into semantic domains while preserving their compositional relations. Equivalence relations identify invariant meanings across different representations. The surrounding topos provides the contextual environment in which interpretations are evaluated, while temporal morphisms represent the continuous updating of cognitive states. Together, these components describe cognition as a dynamic system of transformations evolving through contextual interaction while preserving structural semantic invariants.


\section{Cognitive dynamics and  stabilization }

 The axioms above imply that cognitive evolution operates on equivalence classes of representations rather than on individual representations. This observation can be formalized by the following theorem that  shows that the master equation of cognition defines a well-defined dynamical system on semantic equivalence classes.

\begin{theorem}[Well-defined cognitive dynamics]
	Let $X$ be the set of cognitive representations and let $\sim$ be the semantic equivalence relation defined in Axiom 3. Suppose that
	
	\[
	f : X \rightarrow X
	\]
	
	is a cognitive transformation (Axiom 2) and
	
	\[
	F : \mathcal{C} \rightarrow \mathcal{D}
	\]
	
	is the semantic functor defined in Axiom 4. If the transformation $F \circ f$ preserves semantic equivalence, that is,
	
	\[
	x \sim y \; \Rightarrow \; F(f(x)) \sim F(f(y)),
	\]
	
	then the update rule
	
	\[
	X_{t+1} = F(f(X_t)) / \sim
	\]
	
	defines a well-defined dynamical system on the quotient space
	
	\[
	X / \sim .
	\]
	
\end{theorem}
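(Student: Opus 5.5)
The plan is to reduce the statement to the standard fact that a map compatible with an equivalence relation descends to the quotient. First I would fix an interpretation so that the composite $F\circ f$ is a self-map of $X$. Axiom 4 has $F$ acting on the objects of $\mathcal{C}$, which are cognitive states, with values in $\mathcal{D}$. For the update rule to make sense we must read $F$ on objects as a map $X\to X$; equivalently, $\mathcal{D}$ has the same underlying set of representations as $\mathcal{C}$, or $\mathcal{D}$ is embedded back into $X$. I would state this identification explicitly. Then $g := F\circ f : X\to X$ is an ordinary function.

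The update rule $X_{t+1} = F(f(X_t))/\sim$ is to be read as $\pi\circ g$, lifted to a map on classes. So I would define
\[
\Phi : X/\sim \;\to\; X/\sim, \qquad \Phi([x]) := [\,F(f(x))\,] = \pi(g(x)).
\]
The key step is well-definedness. Suppose $[x]=[y]$, that is, $x\sim y$. The hypothesis gives $F(f(x))\sim F(f(y))$, so $\pi(g(x))=\pi(g(y))$. Hence $\Phi([x])$ does not depend on the chosen representative. Equivalently, $\pi\circ g$ is constant on the fibers of $\pi$. Since $\pi$ is surjective, the universal property of the quotient in $\mathbf{Set}$ gives a unique $\Phi$ with $\Phi\circ\pi = \pi\circ g$. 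I would record this as a commutative square, to match the paper's diagrammatic style.

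Finally, I would read off the dynamical system. Given an initial class $[X_0]$, set $[X_{t+1}] := \Phi([X_t])$, which is the iteration $\Phi^t$ of a single self-map of $X/\sim$. Two checks complete the argument.
\begin{itemize}
\item The trajectory depends only on the initial class. By induction, if $X_0\sim X_0'$ then $g^t(X_0)\sim g^t(X_0')$ for all $t$, because $\pi\circ g^t = \Phi^t\circ\pi$.
\item Iteration is compatible with the semigroup law, since $\Phi^{s+t}=\Phi^s\circ\Phi^t$.
\end{itemize}

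The main obstacle is not the quotient argument, which is routine. It is the typing issue in the first step: making precise how a functor $F:\mathcal{C}\to\mathcal{D}$ yields an endomap of $X$ so that $\sim$ can be applied to $F(f(x))$. I would handle it by making the identification of $\mathcal{D}$'s objects with elements of $X$ an explicit standing assumption.
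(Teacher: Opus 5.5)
Your proposal is correct and follows the paper's proof. Like the paper, you define $\Phi([x]) := [F(f(x))]$ and use the hypothesis $x\sim y \Rightarrow F(f(x))\sim F(f(y))$ to show that $\Phi$ does not depend on the chosen representative. Your additions only make explicit what the paper leaves implicit: the standing assumption that $F$ acts on objects as a self-map of $X$ (which the paper's later square $X\xrightarrow{f}X\xrightarrow{F}X$ presupposes), the identity $\Phi\circ\pi=\pi\circ F\circ f$, and the iteration checks.
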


\begin{proof}
	Let $[x]$ denote the equivalence class of $x$ under the relation $\sim$.  
	By assumption, if $x \sim y$ then $F(f(x)) \sim F(f(y))$. Therefore the mapping
	
	\[	 	
	\Phi([x]) := [F(f(x))]
	\]
	does not depend on the choice of representative $x$ in the equivalence class $[x]$. 
 This is well-defined map provided that $F \circ f$ is compatible with the equivalence relation $\sim$.
 	Hence $\Phi$ defines a well-defined map
	
	\[
	\Phi : X/\sim \; \rightarrow \; X/\sim .
	\]
		This map describes the evolution of cognitive states at the level of semantic equivalence classes. Consequently, the cognitive update rule
	
	\[
	X_{t+1} = F(f(X_t)) / \sim
	\]
		induces a dynamical system on the quotient space $X/\sim$. This shows that cognitive evolution can be represented as the iteration of a transformation acting on semantic classes of representations rather than on individual representations. 
\end{proof}

This theorem shows that the  master equation does not depend on individual representations, but on semantic classes of representations so that cognition evolves on meanings,
not on raw representations

With the following fixed point theorem for cognitive stabilization  we show the  conditions under which cognition converges to a stable interpretation, thus avoiding 
linguistic disambiguation, and leading to belief stabilization
and semantic convergence.
In the categorical setting, semantic stabilization may be interpreted as the existence of a fixed point for the endomorphism induced by the cognitive update process on the quotient object of semantic equivalence classes.
The structural update rule introduced above suggests that cognitive evolution may converge toward stable semantic states. This intuition can be formalized as a fixed point result on the quotient space of semantic equivalence classes.

\begin{theorem}[Fixed Point Theorem for Cognitive Stabilization]
	Let $X$ be a nonempty set of cognitive representations, let $\sim$ be a semantic equivalence relation on $X$, and let
	\[
	\Phi : X/\sim \;\to\; X/\sim
	\]
	be the induced cognitive update map defined by
	\[
	\Phi([x]) = [F(f(x))],
	\]
	where $f : X \to X$ is a cognitive transformation and $F$ is the semantic functor. Assume that:
	
	\begin{enumerate}
		\item the map $\Phi$ is well defined;
		\item there exists a metric $d$ on $X/\sim$ such that $(X/\sim,d)$ is a complete metric space;
		\item $\Phi$ is a contraction, i.e. there exists a constant $0<c<1$ such that
		\[
		d(\Phi([x]),\Phi([y])) \le c\, d([x],[y])
		\qquad
		\text{for all } [x],[y]\in X/\sim.
		\]
	\end{enumerate}
	
	Then $\Phi$ admits a unique fixed point $[x^\ast]\in X/\sim$, that is,
	\[
	\Phi([x^\ast]) = [x^\ast].
	\]
	Moreover, for every initial cognitive state $[x_0]\in X/\sim$, the iterated sequence
	\[
	[x_{n+1}] = \Phi([x_n])
	\]
	converges to $[x^\ast]$.
	
\end{theorem}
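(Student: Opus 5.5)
The plan is to reduce the statement to the Banach contraction principle applied on the complete metric space $(X/\sim,d)$. Hypothesis (1), which the preceding theorem on well-defined cognitive dynamics guarantees whenever $F\circ f$ respects $\sim$, makes $\Phi$ a genuine self-map of $X/\sim$. Beyond that, the argument never needs representatives $x$ of a class $[x]$, nor any categorical structure of $F$. So I would write $u_n := [x_n]$ and work purely with points of the metric space $M := X/\sim$. Note that $M$ is nonempty because $X$ is.

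First, fix an arbitrary $u_0\in M$ and define $u_{n+1}=\Phi(u_n)$. By induction on $n$, the contraction hypothesis (3) gives
\[
d(u_{n+1},u_n)\le c^n\, d(u_1,u_0).
\]
Combining this with the triangle inequality and summing the geometric series, for $m>n$ I obtain
\[
d(u_m,u_n)\le \frac{c^n}{1-c}\, d(u_1,u_0).
\]
Since $0<c<1$, the right-hand side tends to $0$, so $(u_n)$ is Cauchy. By completeness (hypothesis (2)) it converges to some $u^\ast=[x^\ast]\in M$.

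Next I show that $u^\ast$ is fixed. A contraction is Lipschitz, hence continuous, so $\Phi(u_n)\to\Phi(u^\ast)$. But $\Phi(u_n)=u_{n+1}\to u^\ast$, and limits in a metric space are unique, so $\Phi(u^\ast)=u^\ast$. Alternatively, one can bound directly:
\[
d(\Phi(u^\ast),u^\ast)\le c\,d(u^\ast,u_n)+d(u_{n+1},u^\ast)\to 0.
\]

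For uniqueness, suppose $v^\ast$ is another fixed point. Then
\[
d(u^\ast,v^\ast)=d(\Phi(u^\ast),\Phi(v^\ast))\le c\,d(u^\ast,v^\ast),
\]
which forces $d(u^\ast,v^\ast)=0$ because $c<1$. Hence $u^\ast=v^\ast$. Since $u_0$ was arbitrary and the fixed point is unique, every orbit converges to the same $[x^\ast]$. This gives the final claim, with the a priori rate $d(u_n,u^\ast)\le c^n d(u_1,u_0)/(1-c)$.

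There is no real obstacle here. The only point needing care is to keep the argument on the quotient $M$, where both $d$ and $\Phi$ are defined, rather than on $X$. A metric on $X/\sim$ need not lift to one on $X$, and the contraction is only assumed at the level of classes. Everything else is the standard Banach argument.
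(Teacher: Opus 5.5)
Your proof is correct and follows the same route as the paper, which simply applies the Banach Fixed Point Theorem on the complete metric space $(X/\sim,d)$ to get both the unique fixed point $[x^\ast]$ and the convergence of every orbit. You unpack that theorem's standard proof (geometric Cauchy estimate, continuity of the contraction, uniqueness from $c<1$) directly on the quotient, which additionally gives the explicit rate $d(u_n,u^\ast)\le c^n d(u_1,u_0)/(1-c)$.
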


\begin{proof}
	By assumption, $(X/\sim,d)$ is a complete metric space and $\Phi$ is a contraction on $X/\sim$. Therefore the Banach Fixed Point Theorem applies. Hence there exists a unique element $[x^\ast]\in X/\sim$ such that
	\[
	\Phi([x^\ast]) = [x^\ast].
	\]
	
	It remains to interpret the iteration. Starting from any initial class $[x_0]\in X/\sim$, define recursively
	\[
	[x_{n+1}] = \Phi([x_n]).
	\]
	Again by the Banach Fixed Point Theorem, the sequence $\{[x_n]\}_{n\ge 0}$ converges in the metric $d$ to the unique fixed point $[x^\ast]$.
	
	Thus the cognitive update dynamics stabilize to a unique semantic equivalence class, independently of the initial representative, provided that the contraction hypothesis holds.
\end{proof}

\begin{remark}
	The theorem shows that, under suitable regularity assumptions, repeated contextual updating leads to a stable interpretation. In cognitive terms, this fixed point represents a semantically stabilized state: ambiguity is progressively reduced, and the interpretive process converges toward an invariant meaning class.
\end{remark}

\subsection{Numerical stability of a the case cognitive update}

To complement the abstract fixed-point result, we analyze the numerical stability of the case map
\begin{equation}
\Phi(x)=\beta \tanh\!\bigl(x+\alpha \sin x\bigr),
\label{eq:toy_phi}
\end{equation}
with parameters $\alpha>0$ and $\beta>0$.

\begin{proposition}[Sufficient condition for global stability]
	Assume that
	\begin{equation}
	\beta(1+\alpha)<1.
	\label{eq:contraction_condition}
	\end{equation}
	Then $\Phi:\mathbb{R}\to\mathbb{R}$ is a contraction. Consequently, the iteration
	\[
	x_{t+1}=\Phi(x_t)
	\]
	admits a unique fixed point $x^\ast\in\mathbb{R}$, and for every initial condition $x_0\in\mathbb{R}$
	the sequence $(x_t)_{t\geq 0}$ converges to $x^\ast$.
\end{proposition}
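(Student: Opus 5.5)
The plan is to show that $\Phi$ is globally Lipschitz on $\mathbb{R}$ with constant $c=\beta(1+\alpha)<1$. Then we apply the Banach Fixed Point Theorem on the complete metric space $(\mathbb{R},|\cdot|)$, exactly as in the Fixed Point Theorem for Cognitive Stabilization above. There $X/\sim$ is simply $\mathbb{R}$ with the trivial equivalence relation, $d(x,y)=|x-y|$, and $\Phi$ is already defined as a map $\mathbb{R}\to\mathbb{R}$.

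The key step is a derivative bound. Write $\Phi=\beta\, h\circ g$ with $g(x)=x+\alpha\sin x$ and $h=\tanh$. Both are $C^1$, so the chain rule gives
\[
\Phi'(x)=\beta\,\sech^2\bigl(x+\alpha\sin x\bigr)\,\bigl(1+\alpha\cos x\bigr).
\]
Since $0<\sech^2\le 1$ and $|1+\alpha\cos x|\le 1+\alpha|\cos x|\le 1+\alpha$, we get $|\Phi'(x)|\le \beta(1+\alpha)$ for all $x\in\mathbb{R}$.

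By the mean value theorem, for any $x,y$ there is a $\xi$ between them with
\[
|\Phi(x)-\Phi(y)|=|\Phi'(\xi)|\,|x-y|\le \beta(1+\alpha)\,|x-y|.
\]
Under the hypothesis \eqref{eq:contraction_condition}, this is a contraction with constant $c=\beta(1+\alpha)\in(0,1)$; note $c>0$ since $\alpha,\beta>0$.

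Finally, $\mathbb{R}$ with the usual metric is complete. The Banach Fixed Point Theorem therefore yields a unique fixed point $x^\ast$, and for every $x_0$ the iterates $x_{t+1}=\Phi(x_t)$ converge to $x^\ast$, with the a priori estimate $|x_t-x^\ast|\le c^t|x_0-x^\ast|$.

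There is no real obstacle here. The only point requiring care is the uniform bound on $|1+\alpha\cos x|$: for $\alpha>1$ the factor $1+\alpha\cos x$ can become negative, so the bound must be stated on its absolute value rather than on the factor itself, as done above. Alternatively, one can avoid derivatives entirely by using that $\tanh$ is $1$-Lipschitz and that $g$ is $(1+\alpha)$-Lipschitz, since $\sin$ is $1$-Lipschitz.
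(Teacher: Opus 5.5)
Your proof is correct and follows the paper's argument. It uses the same chain-rule bound $|\Phi'(x)|\le\beta(1+\alpha)$, from $\sech^2\le 1$ and $|1+\alpha\cos x|\le 1+\alpha$, followed by the Banach fixed-point theorem on $\mathbb{R}$; you also spell out the mean value step that the paper leaves implicit when passing from the derivative bound to the Lipschitz estimate.
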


\begin{proof}
	Differentiating \eqref{eq:toy_phi}, we obtain
	\[
	\Phi'(x)=\beta\,\sech^2\!\bigl(x+\alpha\sin x\bigr)\bigl(1+\alpha\cos x\bigr).
	\]
	Hence
	\[
	|\Phi'(x)|
	\leq
	\beta \cdot 1 \cdot (1+\alpha)
	=
	\beta(1+\alpha)
	\qquad \text{for all }x\in\mathbb{R},
	\]
	because $\sech^2(z)\leq 1$ and $|1+\alpha\cos x|\leq 1+\alpha$.
	If $\beta(1+\alpha)<1$, then
	\[
	|\Phi'(x)|\leq c<1
	\qquad \text{for all }x\in\mathbb{R},
	\]
	so $\Phi$ is globally Lipschitz with constant $c<1$, hence a contraction.
	The conclusion follows from the Banach fixed-point theorem.
\end{proof}

A particularly simple admissible choice is
\[
\alpha=0.8,
\qquad
\beta=0.5,
\]
for which
\[
\beta(1+\alpha)=0.5(1.8)=0.9<1.
\]
Therefore the corresponding cognitive dynamics are globally stable.

From the modeling viewpoint, condition \eqref{eq:contraction_condition} expresses a balance between
two competing effects: $\alpha$ amplifies contextual nonlinearity, whereas $\beta$ damps the update.
Stability is guaranteed when damping dominates contextual amplification.

\[
\begin{tikzcd}[column sep=large,row sep=large]
X \arrow[r,"f"] \arrow[d,"\pi"'] 
& X \arrow[r,"F"] 
& X \arrow[d,"\pi"] \\
X/\sim \arrow[rr,"\Phi"'] 
& 
& X/\sim
\end{tikzcd}
\]

The commutativity of this diagram expresses that the cognitive update rule is compatible with semantic equivalence. Consequently, the cognitive dynamics are well defined on the quotient space of semantic classes.

\section{Adjunction between Syntax and Semantics}

	Syntax refers to the system of rules and principles that govern the structure and organization of expressions in a language. It specifies how words and symbols combine to form well-formed phrases and sentences, independently of their meaning \cite{Chomsky1957,Carnie2013}. In formal approaches, syntactic structures are typically represented through hierarchical constructions such as trees or derivations that describe the compositional arrangement of linguistic elements \cite{Chomsky1995}. From a structural perspective, syntax therefore defines the formal architecture of linguistic representations upon which semantic interpretation operates.

The interaction between syntactic representations and semantic interpretation may be described categorically through an adjunction between categories.

An \emph{adjunction} between two categories $\mathcal{C}$ and $\mathcal{D}$ consists of a pair of functors

\[
F : \mathcal{C} \rightarrow \mathcal{D}, 
\qquad
G : \mathcal{D} \rightarrow \mathcal{C}
\]
together with a natural bijection between hom-sets

\[
\mathrm{Hom}_{\mathcal{D}}(F(X),Y)
\;\cong\;
\mathrm{Hom}_{\mathcal{C}}(X,G(Y))
\]
that is natural in both $X \in \mathcal{C}$ and $Y \in \mathcal{D}$ \cite{Awodey2010,MacLane1971}. 
In this situation the functor $F$ is called the \emph{left adjoint} of $G$, and $G$ is called the \emph{right adjoint} of $F$, often denoted

\[
F \dashv G.
\]

Adjunctions express a fundamental correspondence between two mathematical structures, allowing constructions in one category to be systematically translated into constructions in another. They play a central role in many areas of mathematics, including algebra, topology, logic, and category theory \cite{MacLane1971}.

\begin{remark} 
Let us remind that for a category $\mathcal{C}$ and objects $A,B \in \mathcal{C}$, the notation
$
\mathrm{Hom}_{\mathcal{C}}(A,B)
$
denotes the set of morphisms from $A$ to $B$ in the category $\mathcal{C}$ \cite{Awodey2010,MacLane1971}. 
Elements of this set are arrows
$
f : A \rightarrow B
$
representing structure-preserving transformations between the objects.
Hom-sets play a central role  because many constructions can be expressed in terms of relationships between these sets.  In particular, adjunctions between categories are defined through natural correspondences between Hom-sets.
Let
$
F : \mathcal{C} \rightarrow \mathcal{D},
\qquad
G : \mathcal{D} \rightarrow \mathcal{C}
$
be two functors. An adjunction between $\mathcal{C}$ and $\mathcal{D}$ consists of a natural bijection
$
\mathrm{Hom}_{\mathcal{D}}(F(X),Y)
\;\cong\;
\mathrm{Hom}_{\mathcal{C}}(X,G(Y))
$
for all objects $X \in \mathcal{C}$ and $Y \in \mathcal{D}$ \cite{MacLane1971}. 
When this condition holds, $F$ is called the left adjoint of $G$ and one writes
$
F \dashv G.
$
Intuitively, this correspondence expresses that morphisms from $F(X)$ to $Y$ in the category $\mathcal{D}$ correspond uniquely to morphisms from $X$ to $G(Y)$ in the category $\mathcal{C}$.
\end{remark}

\begin{remark} 
	The symbol $\cong$ between morphisms generally denotes that the morphisms correspond under an isomorphism or a natural bijection between Hom-sets. 
	More precisely, suppose that
	$
	\Phi : \mathrm{Hom}_{\mathcal{D}}(A,B)
	\rightarrow
	\mathrm{Hom}_{\mathcal{C}}(C,D)
	$	
	is a bijection between two sets of morphisms. If
$
	f \in \mathrm{Hom}_{\mathcal{D}}(A,B),
	\qquad
	g \in \mathrm{Hom}_{\mathcal{C}}(C,D),
$
	and $\Phi(f)=g$, we writes
$
	f \cong g
$
		to indicate that the two morphisms correspond through this bijection \cite{Awodey2010,MacLane1971}. 
	Thus the symbol $\cong$ expresses that the morphisms are not necessarily identical but are equivalent through a structural correspondence induced by an isomorphism of objects, a natural transformation, or an adjunction.	
\end{remark}

Now we can show that the adjunction between the category of syntactic representations $\mathcal{S}$ and the category of semantic structures $\mathcal{M}$ is given as follows: 

\[
\begin{tikzcd}[column sep=large]
\mathcal{S} 
\arrow[r, shift left=2, "F"] 
& 
\mathcal{M}
\arrow[l, shift left=2, "G"]
\end{tikzcd}
\]

\[
F \dashv G
\]


  The functor $F$ interprets syntactic constructions as semantic objects, while $G$ associates semantic structures with corresponding syntactic representations. The adjunction $F \dashv G$ expresses the structural correspondence between language and meaning, i.e.  between syntax and semantics. The functor $F$ interprets syntax into meaning while the functor $G$ reconstructs the syntactic structure from the semantic constraints. So that adjunction expresses the balance between structure and interpretation in cognition.  In this framework, $\mathcal{S}$ represents the category of syntactic expressions and transformations, while $\mathcal{M}$ represents the category of semantic structures. The functor $F$ maps syntactic expressions into their semantic interpretations, whereas $G$ associates semantic objects with their possible syntactic realizations. The adjunction $F \dashv G$  can be expressed by  the natural bijection  
\[
\mathrm{Hom}_{\mathcal{M}}(F(S),M)
\cong
\mathrm{Hom}_{\mathcal{S}}(S,G(M)).
\]

By using the adjunction between syntax and semantics, it can be shown that

\begin{theorem}[Cognitive Adjunction Principle]
	Let $\mathcal{S}$ denote a category of syntactic or cognitive representations and let $\mathcal{M}$ denote a category of semantic meanings. 
	Suppose there exist functors
	
	\[
	F : \mathcal{S} \rightarrow \mathcal{M},
	\qquad
	G : \mathcal{M} \rightarrow \mathcal{S}
	\]
	
	such that
	
	\[
	\mathrm{Hom}_{\mathcal{M}}(F(X),Y)
	\;\cong\;
	\mathrm{Hom}_{\mathcal{S}}(X,G(Y))
	\]
	
	naturally in $X \in \mathcal{S}$ and $Y \in \mathcal{M}$. 
	Then interpretation and representation form an adjoint pair
	
	\[
	F \dashv G,
	\]
	
	expressing a structural correspondence between syntactic representations and semantic meanings.
\end{theorem}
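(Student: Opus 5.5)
The statement is essentially definitional: the hypothesis is exactly the hom-set definition of an adjunction recalled earlier in this section. The plan is therefore to make that identification explicit, and then to extract the standard unit--counit data, which gives the phrase ``structural correspondence'' a concrete mathematical meaning.

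\textbf{Step 1.} Compare the hypothesis with the definition of adjunction given above. An adjunction between $\mathcal{S}$ and $\mathcal{M}$ is a pair of functors $F,G$ together with a bijection
\[
\theta_{X,Y}:\mathrm{Hom}_{\mathcal{M}}(F(X),Y)\to \mathrm{Hom}_{\mathcal{S}}(X,G(Y))
\]
natural in both variables. This is verbatim what is assumed, so $F\dashv G$ holds by definition. The substantive work is to spell out naturality. For $u:X'\to X$ in $\mathcal{S}$, $v:Y\to Y'$ in $\mathcal{M}$ and $h:F(X)\to Y$, naturality means
\[
\theta_{X',Y'}(v\circ h\circ F(u)) = G(v)\circ \theta_{X,Y}(h)\circ u .
\]

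\textbf{Step 2.} Construct the unit and counit. Define
\[
\eta_X:=\theta_{X,F(X)}(\mathrm{id}_{F(X)}):X\to GF(X),\qquad \varepsilon_Y:=\theta^{-1}_{G(Y),Y}(\mathrm{id}_{G(Y)}):FG(Y)\to Y .
\]
Applying naturality with one identity argument gives two things. First, $\eta$ and $\varepsilon$ are natural transformations. Second, every $h$ factors uniquely as $\theta(h)=G(h)\circ\eta_X$, and every $k:X\to G(Y)$ satisfies $\theta^{-1}(k)=\varepsilon_Y\circ F(k)$. Substituting identities then yields the triangle identities
\[
G\varepsilon\circ\eta G=\mathrm{id}_G,\qquad \varepsilon F\circ F\eta=\mathrm{id}_F .
\]
Cognitively, $\eta_X$ is the canonical ``best syntactic reconstruction'' of the interpretation of $X$, and $\varepsilon_Y$ is the canonical realization of a meaning from its syntactic form. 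The universal property of $\eta_X$ (every $X\to G(Y)$ factors uniquely through it) is the precise sense in which interpretation and representation correspond.

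\textbf{Main obstacle.} There is no real mathematical difficulty; the proof is routine bookkeeping. The only care needed is to keep the variance straight in the naturality squares, since $\mathrm{Hom}$ is contravariant in its first argument. To check this, I would first verify naturality of $\eta$ by taking $v=F(u)$ and $h=\mathrm{id}$ in the identity of Step 1, and then obtain the rest by duality.
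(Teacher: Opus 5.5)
Your Step 1 is exactly the paper's argument: the hypothesis is verbatim the hom-set definition of an adjunction, so $F\dashv G$ holds by definition (the paper only adds a cognitive reading of the morphisms $F(X)\to Y$ and $X\to G(Y)$). Step 2 is a correct addition that the paper does not give and the statement does not require. One detail needs fixing: your naturality hint ($v=F(u)$, $h=\mathrm{id}$, with the first-argument map taken to be $\mathrm{id}_{X'}$) yields only $\theta(F(u))=GF(u)\circ\eta_{X'}$, so it must be paired with the instance $v=\mathrm{id}$, $h=\mathrm{id}_{F(X)}$, which yields $\theta(F(u))=\eta_X\circ u$.
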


\begin{proof}
	A morphism 
$
	F(X) \rightarrow Y
$
	in the semantic category represents a way in which the meaning of the representation $X$ can be interpreted as the semantic object $Y$. 
	
	Conversely, a morphism
	$
	X \rightarrow G(Y)
	$
	represents a way in which the semantic structure $Y$ can be realized or encoded syntactically.
		The natural bijection between the two Hom-sets therefore establishes a one-to-one correspondence between semantic interpretations and syntactic realizations. 
	This correspondence satisfies the defining property of an adjunction, yielding the adjoint pair
	$
	F \dashv G.
$
\end{proof}

  Intuitively this 
 theorem states: syntax $\rightarrow $   semantics  via $F$ and 
 semantics $\rightarrow $  syntax  via $G$
and adjunction means: interpreting a structure and generating a structure from meaning are dual operations, according to the structural bridge between language and cognition given by our  framework.


The interaction between syntactic representations and semantic interpretation may be characterized through a universal property. Intuitively, semantic meaning provides a canonical representation that factors all meaning-preserving interpretations of syntactic expressions.

	Let $\mathcal{S}$ be the category of syntactic representations and $\mathcal{M}$ the category of semantic structures. Suppose that
	$
	F : \mathcal{S} \rightarrow \mathcal{M}
$	
	is the semantic interpretation functor, we
can show that 
	
	\begin{theorem}[Universal Property of Meaning] For every category $\mathcal{D}$ and every functor
	
	\[
	H : \mathcal{S} \rightarrow \mathcal{D}
	\]
	
	that preserves semantic equivalence, there exists a unique functor
	
	\[
	\widetilde{H} : \mathcal{M} \rightarrow \mathcal{D}
	\]
	
	such that the following diagram commutes:
	
	\[
	\begin{tikzcd}[column sep=large,row sep=large]
	\mathcal{S} 
	\arrow[r,"F"] 
	\arrow[dr,"H"'] 
	& 
	\mathcal{M} 
	\arrow[d,"\widetilde{H}"] \\
	& 
	\mathcal{D}
	\end{tikzcd}
	\]
	
	That is,
	
	\[
	H = \widetilde{H} \circ F.
	\]
	
\end{theorem}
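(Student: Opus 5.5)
The plan is to read the statement as the universal property of a \emph{quotient category}, and then to reproduce the standard construction of \cite[II.8]{MacLane1971}, generalising the set-level argument already used in the Theorem on well-defined cognitive dynamics, where $\Phi([x]) := [F(f(x))]$ was defined on equivalence classes. As literally stated the claim cannot hold for an arbitrary functor $F$. If $F$ misses objects or morphisms of $\mathcal{M}$, then $\widetilde{H}$ is not unique. If $F$ fails to identify things that $H$ identifies, then $\widetilde{H}$ need not exist. So the first step is to fix the hypotheses that the statement leaves implicit.

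\textbf{Step 1: fix the setting.} Semantic equivalence (Axiom 3) is extended to a congruence on $\mathcal{S}$:
\begin{itemize}
\item an equivalence $\sim$ on objects;
\item for each pair of objects, an equivalence $\sim$ on morphisms, compatible with composition and identities.
\end{itemize}
I take $\mathcal{M} := \mathcal{S}/{\sim}$ and let $F$ be the canonical projection, sending $X \mapsto [X]$ and $g \mapsto [g]$. Because the congruence may also identify objects, the morphisms of $\mathcal{M}$ must be defined as equivalence classes of composable zig-zags (finite chains) of morphisms of $\mathcal{S}$, as in the generalized congruences of Bednarczyk--Borzyszkowski--Pawlowski. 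Alternatively, to keep things elementary, I would restrict to congruences that are trivial on objects, as Mac Lane does. A functor $H : \mathcal{S} \to \mathcal{D}$ \emph{preserves semantic equivalence} when $X \sim X'$ implies $H(X) = H(X')$ and $g \sim g'$ implies $H(g) = H(g')$.

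\textbf{Step 2: existence.} Define $\widetilde{H}([X]) := H(X)$ and $\widetilde{H}([g]) := H(g)$. For chains, send a chain to the composite of the $H$-images of its pieces. Well-definedness is exactly the compatibility hypothesis, the same argument as for $\Phi$ earlier. Functoriality follows because composition in $\mathcal{M}$ is induced from composition in $\mathcal{S}$ and $H$ is a functor. The identity $\widetilde{H} \circ F = H$ then holds by construction.

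\textbf{Step 3: uniqueness.} By construction $F$ is surjective on objects, and every morphism of $\mathcal{M}$ is a composite of images $F(g)$. Hence any $K : \mathcal{M} \to \mathcal{D}$ with $K \circ F = H$ is forced to agree with $\widetilde{H}$ on generators, and therefore everywhere.

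\textbf{Main obstacle.} The main obstacle is Step 1, together with the object-identifying case of Step 2. One must check that the composition of chains modulo the generated relation is associative and well defined, and that $H$ respects the generated relation and not merely the generating pairs. I would first prove the restricted version, with $\sim$ trivial on objects, where the proof is routine. I would then treat the general case by taking the generated congruence and arguing by induction on the length of the zig-zag connecting equivalent chains.
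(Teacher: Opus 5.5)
Your argument is correct, and its core idea coincides with the paper's: define $\widetilde{H}$ on a semantic class by evaluating $H$ on any representative, with well-definedness supplied by the hypothesis that $H$ respects $\sim$. The difference lies in rigour and in what is actually proved.

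The paper's proof only speaks of the values of $H$ on objects. It tacitly assumes that the fibres of $F$ are exactly the $\sim$-classes and that $F$ reaches all of $\mathcal{M}$. It then obtains uniqueness ``from the universality of the semantic interpretation'', i.e.\ from the statement itself.

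You make these hidden hypotheses explicit by taking $\mathcal{M}=\mathcal{S}/{\sim}$ with $F$ the projection. You check well-definedness and functoriality at the level of morphisms, which is what forces the chain construction once objects are identified. You derive uniqueness from $F$ being surjective on objects with image generating $\mathcal{M}$.

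The apparent extra generality of the paper's version (an arbitrary semantic category $\mathcal{M}$) is illusory, and your restriction loses nothing. Suppose $F$ itself respects $\sim$, as a semantic interpretation should, and $(\mathcal{M},F)$ has the stated property. Factor the projection $P:\mathcal{S}\to\mathcal{S}/{\sim}$ through $F$, and $F$ through $P$; the two uniqueness clauses then give $\mathcal{M}\cong\mathcal{S}/{\sim}$ compatibly with $F$ and $P$.

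The obstacle you flag is also lighter than you expect. Concatenation of chains is strictly associative. The kernel of the map sending a chain to the composite of its $H$-images is an equivalence relation, closed under concatenation, containing all generating pairs. Hence it contains the generated congruence, and no induction on zig-zags is needed.

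Your motivating remarks contain two slips, neither of which affects Steps 1--3.
\begin{itemize}
\item Existence of $\widetilde{H}$ is obstructed when $F$ identifies objects or morphisms that $H$ keeps apart (one needs $F(a)=F(b)\Rightarrow H(a)=H(b)$), or when $\mathcal{M}$ contains structure not generated by the image of $F$. It is not obstructed when $F$ fails to identify things that $H$ identifies, since $\widetilde{H}$ may send distinct objects to the same one.
\item Uniqueness can fail only when the image of $F$ does not \emph{generate} $\mathcal{M}$, not merely when $F$ misses morphisms. In your own object-identifying quotient $F$ is in general not full: a chain $(g,h)$ with $\mathrm{cod}\,g\sim\mathrm{dom}\,h$ but $\mathrm{cod}\,g\neq\mathrm{dom}\,h$ need not be the image of a single morphism. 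Yet $\widetilde{H}$ is unique, exactly as your Step 3 argues.
\end{itemize}
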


\begin{proof}
	The functor $F$ assigns to each syntactic representation its semantic interpretation. If a functor $H$ preserves semantic equivalence, then the value of $H$ on syntactic objects depends only on their semantic interpretation. Consequently, $H$ factors uniquely through the semantic functor $F$. This factorization defines the functor $\widetilde{H}$ such that $H = \widetilde{H} \circ F$, and the uniqueness follows from the universality of the semantic interpretation.
	\end{proof}

	This proposition expresses that semantic meaning acts as a universal mediator between syntactic representations and other interpretative structures. Any interpretation of syntactic expressions that respects semantic equivalence must factor through the semantic interpretation functor. In cognitive terms, this means that meaning provides the canonical level at which different representations can be compared and integrated.

So that  every interpretation of language that respects meaning
must pass through the semantic level. In short:
meaning is the universal mediator between representations.


\section{Extended Example: Cognitive Interpretation of an Ambiguous Text}

To illustrate the structural theory of cognition developed in the previous sections, we consider a concrete lexical example involving lexical ambiguity and contextual interpretation.

Consider the (ambiguous) lexical sentence

\[
S_0 = \text{``I went to the bank.''}
\]
At the purely syntactic level, the word ``bank'' admits multiple interpretations. Let
\[
X = \{\textsf{financial\ institution},\ \textsf{river\ bank}\}
\]
denote the set of possible semantic interpretations associated with this lexical item.

Thus the initial cognitive state may be represented as
\[
X_0 = X .
\]

This corresponds to the situation in which the listener has not yet received sufficient contextual information to determine the intended meaning.

Now consider the extended sentence
\[
S_1 = \text{``I went to the bank to open a savings account.''}
\]
The contextual phrase ``to open a savings account'' provides additional semantic constraints that restrict the admissible interpretations. Formally, we define the cognitive transformation
\[
f : X \rightarrow X
\]
that incorporates contextual information and refinement. Under this transformation,

\[
f(\textsf{financial\ institution}) = \textsf{financial\ institution},
\]

\[
f(\textsf{river\ bank}) = \emptyset.
\]

Thus the updated cognitive state becomes

\[
X_1 = f(X_0) = \{\textsf{financial\ institution}\}.
\]

Let
\[
F : \mathcal{C} \rightarrow \mathcal{M}
\]
be the semantic functor mapping cognitive representations to semantic structures. Applying this functor yields

\[
F(f(X_0)) = \textsf{financial\ institution}.
\]

This step corresponds to interpreting the syntactic representation within the semantic domain, so that $F$ gives the semantic interpretation.
However, this interpretation, although unique, may arise from multiple equivalent representations.
  In fact, different linguistic expressions may correspond to the same semantic meaning.
For example,

\[
S_1 = \text{``I went to the bank to open a savings account.''}
\]
and
\[
S_2 = \text{``I visited a financial institution to open an account.''}
\]

represent the same semantic content. This relation is captured by the equivalence relation
\[
S_1 \sim S_2 .
\]
The equivalence class
\[
[S] = \{\text{all sentences expressing the same financial interpretation}\}
\]
represents the invariant semantic meaning associated with the utterance.

The entire interpretative process can therefore be summarized by the cognitive update master  equation \eq{xi1}. 

So that, starting from the ambiguous state
\[
X_0 = \{\textsf{financial},\ \textsf{river}\},
\]
the contextual transformation $f$ eliminates incompatible interpretations, the semantic functor $F$ maps the resulting representation into the semantic domain, and the equivalence relation $\sim$ identifies all syntactic expressions that share the same meaning.

All this process may be summarized by the following categorical diagram:

\[
\begin{tikzcd}[column sep=large]
X_0
\arrow[r,"f"]
&
f(X_0)
\arrow[r,"F"]
&
F(f(X_0))
\arrow[r,"/\sim"]
& \left[X\right]
\end{tikzcd}
\]
Here $[X]$ denotes the semantic equivalence class corresponding to the stabilized interpretation.

This example illustrates how the proposed framework models cognitive interpretation as a sequence of transformations acting on representational states. Initially, multiple semantic possibilities coexist. Contextual information reduces this ambiguity by restricting the admissible interpretations. The semantic mapping organizes these interpretations within a structured semantic domain, while equivalence relations identify invariant meanings across different linguistic realizations.

Thus the cognitive interpretation of a sentence may be described as a dynamic process in which contextual information, semantic mapping, and equivalence relations interact to produce a stable semantic interpretation.

 \subsection{Concrete linguistic application: context-sensitive interpretation of an ambiguous utterance}
 
 A more concrete linguistic application of the proposed framework is given by context-sensitive
 interpretation of the sentence
 \[
 S_0 = \text{``The chicken is ready to eat.''}
 \]
 This utterance is classically ambiguous, since it admits at least two interpretations:
 
 \[
 m_1 = \text{``the cooked chicken is ready to be eaten,''}
 \]
 \[
 m_2 = \text{``the live chicken is ready to eat food.''}
 \]
 
 Let the initial interpretative state be
 \[
 X_0=\{m_1,m_2\}.
 \]
 Without contextual constraints, both meanings remain admissible.
 
 Now consider two different contextual continuations.
 
 \paragraph{Case A: restaurant context.}
 Suppose the utterance is followed by
 \[
 C_A=\text{``Please bring the plates.''}
 \]
 This context favors the food interpretation. We model the corresponding cognitive transformation
 $f_A$ by
 \[
 f_A(m_1)=m_1,
 \qquad
 f_A(m_2)=\varnothing.
 \]
 Hence
 \[
 X_1^{(A)}=f_A(X_0)=\{m_1\}.
 \]
 
 \paragraph{Case B: farm context.}
 Suppose instead that the utterance is followed by
 \[
 C_B=\text{``It has been waiting for its grain since morning.''}
 \]
 This context favors the animal-agent interpretation. We model the corresponding cognitive
 transformation $f_B$ by
 \[
 f_B(m_1)=\varnothing,
 \qquad
 f_B(m_2)=m_2.
 \]
 Hence
 \[
 X_1^{(B)}=f_B(X_0)=\{m_2\}.
 \]
 
 Let $F$ denote the semantic interpretation map. Then
 \[
 F(X_1^{(A)})=m_1,
 \qquad
 F(X_1^{(B)})=m_2.
 \]
 The final semantic outcome depends on contextual updating, not on syntax alone.
 
 This example shows that the same surface sentence may generate different semantic fixed points
 depending on the context in which it is processed. In the language of the present framework,
 cognitive interpretation is a context-driven trajectory from an ambiguous representational state
 to a stabilized semantic class.
 
 The example is especially relevant for cybernetic and cognitive modeling because it exhibits
 feedback between incoming contextual information and semantic state revision. The utterance is
 not interpreted once and for all at the syntactic level; rather, its meaning is progressively regulated
 through dynamic interaction with contextual input.

\section*{Conclusion}

We have proposed a structural theory of cognition in which cognitive-linguistic processes are modeled as transformations between states of interpretation. Functions capture procedural transformations, equivalence relations encode semantic invariance, and category theory provides the compositional structure linking cognitive operations.

The framework yields a mathematically rigorous representation of cognition as a dynamical system evolving under contextual and temporal constraints. It also suggests new directions for integrating logic, semantics, and dynamical systems within a unified categorical setting.

Future work may address quantitative stability, stochastic extensions, and applications to artificial intelligence and neural representations.


\begin{thebibliography}{99}
	
	\bibitem{lawvere1963}
	F. W. Lawvere,
	\textit{Functorial Semantics of Algebraic Theories},
	Proceedings of the National Academy of Sciences, 50 (1963), 869--872.
	
	\bibitem{bell2008}
	J. L. Bell,
	\textit{Toposes and Local Set Theories},
	Dover, 2008.
	
	
	\bibitem{clark1998}
	A. Clark,
	\textit{Being There: Putting Brain, Body, and World Together Again},
	MIT Press, 1998.
	
	\bibitem{lawvere2009}
	F. W. Lawvere and S. Schanuel,
	\textit{Conceptual Mathematics: A First Introduction to Categories},
	Cambridge University Press, 2009.
	
	\bibitem{MacLaneMoerdijk1994}
	S. Mac Lane and I. Moerdijk,
	\textit{Sheaves in Geometry and Logic: A First Introduction to Topos Theory},
	Springer, 1994.
	
	\bibitem{Johnstone2002}
	P. T. Johnstone,
	\textit{Sketches of an Elephant: A Topos Theory Compendium},
	Oxford University Press, 2002.
	
	\bibitem{BirkedalEtAl2011}
	L. Birkedal et al.,
	\textit{First Steps in Synthetic Guarded Domain Theory},
	Logical Methods in Computer Science, 2011.
	
	
	\bibitem{Neisser1967}
	U. Neisser,
	\textit{Cognitive Psychology},
	Appleton-Century-Crofts, 1967.
	
	\bibitem{Anderson2015}
	J. R. Anderson,
	\textit{Cognitive Psychology and Its Implications},
	Worth Publishers, 2015.
	
	\bibitem{Friston2010}
	K. Friston,
	\textit{The Free-Energy Principle: A Unified Brain Theory?},
	Nature Reviews Neuroscience, 11 (2010), 127--138.
	
	\bibitem{Clark1996}
	H. H. Clark,
	\textit{Using Language},
	Cambridge University Press, 1996.
	
	\bibitem{Levinson2000}
	S. C. Levinson,
	\textit{Presumptive Meanings},
	MIT Press, 2000.
	
	\bibitem{SperberWilson1995}
	D. Sperber and D. Wilson,
	\textit{Relevance: Communication and Cognition},
	Blackwell, 1995.
	
	\bibitem{Lyons1977}
	J. Lyons,
	\textit{Semantics},
	Cambridge University Press, 1977.
	
	\bibitem{ChierchiaMcConnellGinet2000}
	G. Chierchia and S. McConnell-Ginet,
	\textit{Meaning and Grammar},
	MIT Press, 2000.
	
	\bibitem{DowtyWallPeters1981}
	D. Dowty, R. Wall, and S. Peters,
	\textit{Introduction to Montague Semantics},
	D. Reidel, 1981.
	
	\bibitem{Chomsky1957}
	N. Chomsky,
	\textit{Syntactic Structures},
	Mouton, 1957.
	
	\bibitem{Carnie2013}
	A. Carnie,
	\textit{Syntax: A Generative Introduction},
	Wiley-Blackwell, 2013.
	
	\bibitem{Chomsky1995}
	N. Chomsky,
	\textit{The Minimalist Program},
	MIT Press, 1995.
	
	\bibitem{MacLane1971}
	S. Mac Lane,
	\textit{Categories for the Working Mathematician},
	Springer, 1971.
	
	\bibitem{Awodey2010}
	S. Awodey,
	\textit{Category Theory},
	Oxford University Press, 2010.
	
	\bibitem{Rutten2000}
	J. J. M. M. Rutten,
	\textit{Universal Coalgebra: A Theory of Systems},
	Theoretical Computer Science, 249 (2000), 3--80.
	
	\bibitem{Halmos1963}
	P. R. Halmos,
	\textit{Lectures on Boolean Algebras},
	Van Nostrand, 1963.
	
	\bibitem{DaveyPriestley2002}
	B. A. Davey and H. A. Priestley,
	\textit{Introduction to Lattices and Order},
	Cambridge University Press, 2002.
	
	\bibitem{Halmos1960}
	P. R. Halmos,
	\textit{Naive Set Theory},
	Van Nostrand, 1960.
	
	\bibitem{moca21}
	D. Monte-Serrat and C. Cattani,
	\textit{The Natural Language for Artificial Intelligence},
	Academic Press, 2021.
	
	\bibitem{ecc13}
	P. J. Eccles,
	\textit{An Introduction to Mathematical Reasoning},
	Cambridge University Press, 2013.
	
	\bibitem{moca21a}
	D. Monte-Serrat and C. Cattani,
	\textit{Interpretability in Neural Networks Towards Universal Consistency},
	International Journal of Cognitive Computing in Engineering, 2 (2021), 30--39.
	
	\bibitem{der68}
	J. Derrida,
	\textit{De la Grammatologie},
	Les Éditions de Minuit, 1968.
	
	\bibitem{who12}
	B. L. Whorf,
	\textit{Language, Thought, and Reality},
	MIT Press, 2012.
	
	\bibitem{alo20}
	F. R. Al-Osaimi,
	\textit{Learning Descriptors Invariance Through Equivalence Relations},
	Journal of Imaging, 6 (2020), 120.
	
	\bibitem{per07}
	L. Perlovsky and R. Kozma (eds.),
	\textit{Neurodynamics of Cognition and Consciousness},
	Springer, 2007.
	
	\bibitem{Quigley2024}
	D. Quigley,
	\textit{Categorical Framework for Typed Extensional and Intensional Models in Formal Semantics},
	arXiv, 2024.
	
	\bibitem{Phillips2024}
	S. Phillips,
	\textit{A Category Theory Perspective on the Language of Thought},
	Frontiers in Psychology, 15 (2024).
	
	\bibitem{Prentner2024}
	R. Prentner,
	\textit{Category Theory in Consciousness Science},
	Synthese, 204 (2024).
	
	\bibitem{Lambert2024}
	M. J. Lambert,
	\textit{A Topos-Theoretic Semantics of Intuitionistic Modal Logic},
	arXiv, 2024.
	
	\bibitem{Lewis2024}
	M. Lewis,
	\textit{Grounded Learning for Compositional Vector Semantics},
	arXiv, 2024.
	
	\bibitem{Wojtowicz2025}
	R. Wojtowicz,
	\textit{Logic-Based Artificial Intelligence Algorithms Supporting Categorical Semantics},
	arXiv, 2025.
	
	\bibitem{Bakirtzis2025}
	G. Bakirtzis et al.,
	\textit{Categorical Semantics of Compositional Reinforcement Learning},
	Journal of Machine Learning Research, 26 (2025).
	
	\bibitem{Mahadevan2025}
	S. Mahadevan,
	\textit{Topos Theory for Generative AI and Large Language Models},
	arXiv, 2025.
	
	
\end{thebibliography}

\end{document}